\newtheorem{thm}{Theorem}[section]
\newtheorem{lem}[thm]{Lemma}
\def\x{{\mathbf x}}
\def\x{{\bf x}}
\def\y{{\bf y}}
\def\x{{\bf x}}
\def\y{{\bf y}}
\title{A Framework for Analyzing Cross-correlators using Price's Theorem and Piecewise-Linear Decomposition}
\author{
 Zhili Xiao \\
 Electrical \& Systems Engineering\\
 Washington University in Saint Louis\\
 Saint Louis, MO 63130 \\
  \texttt{xiaozhili@wustl.edu} \\
   \And
Shantanu Chakrabartty  \\
  Electrical \&  Systems Engineering\\
 Washington University in Saint Louis\\
  Saint Louis, MO 63130 \\
  \texttt{shantanu@wustl.edu} \\
}
\date{}
\begin{document}
\maketitle

\let\thefootnote\relax
\footnotetext{All correspondences related to this manuscript should be addressed to shantanu@wustl.edu.}

\begin{abstract}
Precise estimation of cross-correlation or similarity between two random variables lies at the heart of signal detection, hyperdimensional computing,  associative memories, and neural networks. Although a vast literature exists on different methods for estimating cross-correlations, the question {\it what is the best and the simplest method to estimate cross-correlations using finite samples ?} is still unclear. In this paper, we first argue that the standard empirical approach might not be optimal, even though the estimator exhibits uniform convergence to the true cross-correlation. Instead, we show that there exists a large class of simple non-linear functions that can be used to construct cross-correlators with a higher signal-to-noise ratio (SNR). To demonstrate this, we first present a general mathematical framework using Price's Theorem that allows us to analyze arbitrary cross-correlators constructed using a mixture of piece-wise linear functions. Using this framework and a high-dimensional mapping, we show that some of the most promising cross-correlators are based on Huber's loss functions, margin-propagation (MP) functions, and the log-sum-exponential (LSE) functions.
\end{abstract}

\section{Introduction}
Estimating cross-correlations between random variables play an important role in the field of statistics~\cite{statistics}, machine learning~\cite{LeCunCNN,Cross-correlationNN,ObjectRecognition,PatternCA}, and signal detection~\cite{FNCC,acoustic,radarscore, aeroradar}. This is because the cross-correlation metric measures some form of similarity between the random variables, revealing how one might influence the other. With proper normalization, the metric becomes equivalent to cosine similarity and unitary transforms, both of which are extensively used in linear algebra~\cite{linear}, natural language processing~\cite{NLP}, and computer vision~\cite{visuallearning,KernelCross-Correlator}. In computer vision and signal processing, cross-correlation is often used for feature extraction~\cite{Deepfake}, where higher precision implies more information is retained for further data processing and learning. Accurate and efficient cross-correlation for pattern recognition and template matching~\cite{ASC, LewisFNCC} also ensures reliable and real-time decision-making in applications like radar detection or object recognition. In the emerging field of hyperdimensional computing~\cite{KanervaHyperdimensional,TheoreticalHyperdim}, cross-correlations (or equivalently inner-products) are used for information retrieval from sparse distributed memories~\cite{Hassoun1993AssociativeNM, AttentionSDM}. Since most of the vectors in higher-dimensions are nearly orthogonal to each other, precision in cross-correlation is essential to discriminate between patterns and improve the robustness of learning~\cite{HPCFPGA,HPCprecision}.

In its most general form, cross-correlation $R:\mathbb{R} \times \mathbb{R} \rightarrow  \mathbb{R}$ is defined for a pair of random variables $X \in \mathbb{R}$, $Y \in  \mathbb{R} $ as 
\begin{equation}
    R \mathrel{:=} \mathcal{E}[XY] =  \int_{-\infty}^{\infty} \int_{-\infty}^{\infty} xy \enskip p(X=x,Y=y)dxdy,
    \label{eqn_defcorr}
\end{equation}
where $p:\mathbb{R} \times \mathbb{R} \rightarrow \mathbb{R}^+$ denotes the underlying joint probability distribution from which $x$ and $y$ are drawn from. The operator $\mathcal{E}[.]$ denotes an expectation under the probability measure $p$. 

In practice, the joint distribution $p$ is not known apriori. Instead, one has access to $N$ samples independently drawn from the distribution $p$, as illustrated in Fig.~\ref{fig:Motivation1}c. If we denote the sample vectors as $ \bf{x} \in \mathbb{R}^N$ and $\bf{y} \in \mathbb{R}^N$, then the cross-correlation is empirically estimated as 
\begin{equation}
\hat{R}_N =\frac{1}{N}\sum_{n=1}^{N} x_n y_n,
\label{eqn:empcorr}
\end{equation}
where $x_n \in \mathbb{R}$ and $y_n \in \mathbb{R}$ represent the elements of the vector $\bf{x}$ and $\bf{y}$. Then, by the law of large numbers (LLN), the empirical correlation converges uniformly to the true correlation $R$ 
\begin{equation}
\left|\frac{1}{N}\sum_{n=1}^{N} x_n y_n - R \right| \le \epsilon \stackrel{N \rightarrow \infty}{\longrightarrow} 0,
\label{eqn_empconverge}
\end{equation}
as depicted in Fig.~\ref{fig:Motivation1}. 

\begin{figure}[t]
  \centering
\includegraphics[width=\textwidth]{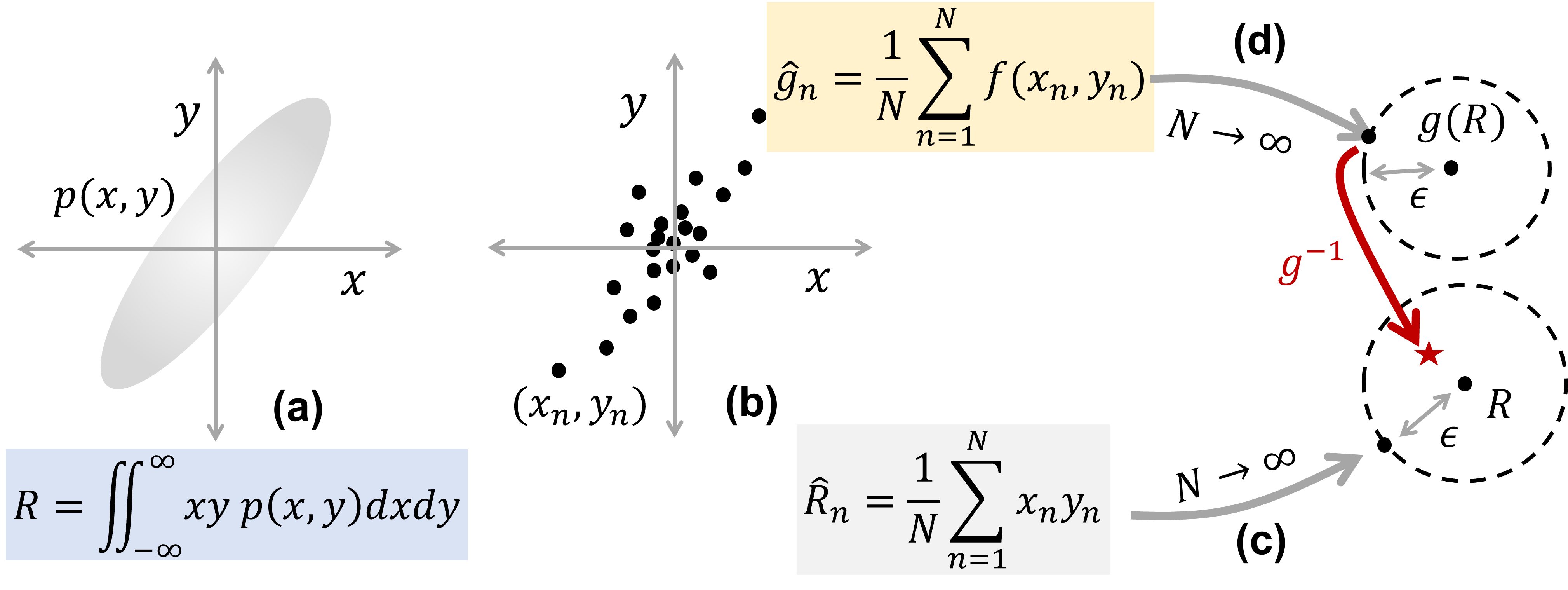}
  \caption{(a) Statistical definition of cross-correlation $R$ between random variables $x$ and $y$ under a joint distribution $p(x,y)$; (b) Empirical cross-correlation $\hat{R}_N$ based on samples $(x_n,y_n)$; (c) Uniform convergence of $\hat{R}_N$ to $R$; (d) Uniform convergence of empirical non-linear cross-correlator $\hat{g}_N$ to $g(R)$; and (e) Estimation of $R$ using $g^{-1}$.}
  \label{fig:Motivation1}
\end{figure}

In this paper, we refer to the above estimation approach using multiplication of samples as the empirical approach, and we explore an alternate approach towards estimating cross-correlations using a class of non-linear functions of samples $f:\mathbb{R} \times \mathbb{R} \rightarrow \mathbb{R}$ such that 
\begin{equation}
\frac{1}{N}\sum_{n=1}^{N} f(x_n,y_n) \stackrel{N \rightarrow \infty}{\longrightarrow} \mathcal{E}[f(X,Y)] = g(R),
\label{eqn_empnonlinear}
\end{equation}
where $g:\mathbb{R} \rightarrow \mathbb{R}$ is a monotonic function.

The uniform convergence 
of $f$ is illustrated in Fig.~\ref{fig:Motivation1} where 
\begin{equation}
\left|\frac{1}{N}\sum_{n=1}^{N} f(x_n,y_n) - g(R)\right| \le \epsilon \stackrel{N \rightarrow \infty}{\longrightarrow} 0.
\label{eqn_empnonlinear2}
\end{equation}
The main premise of this paper is that when $x$ and $y$ are drawn from a stationary distribution, the function $g$ is known apriori or can be estimated with high accuracy. As a result, for a finite sample size $N$, $g^{-1}(\frac{1}{N}\sum_{n=1}^{N} f(x_n,y_n))$ is an unbiased cross-correlation estimator, and its estimation could be closer than $\hat{R}$ to the true cross-correlation $R$, as illustrated in Fig.~\ref{fig:Motivation1}d. 

For the analysis and comparison presented in this paper, we will assume the following without loss of generality.
\begin{enumerate}
\item Both the random variables $X$ and $Y$ for which the cross-correlation is being estimated will be assumed to be zero mean $\mathcal{E}[X] = \mathcal{E}[Y] = 0$ and have unit variance $\mathcal{E}[X^2] = \mathcal{E}[Y^2] = 1$. In this case, the cross-correlation $R$ is equal to Pearson's correlation coefficient~\cite{pearson1895note}. Note that if $X,Y$ have nonzero means $\mu_x$ and $\mu_y$, the cross-correlation can be expressed as follow,
\begin{equation}
    \mathcal{E}[XY] = \mathcal{E}[\left(X - \mu_x\right)\left(Y-\mu_y\right)] + \mu_x \mu_y.
\end{equation}
Since $\mu_x \mu_y$ can be determined apriori, the accuracy of different cross-correlators is determined by the accuracy of the cross-correlation between the zero mean variables $X' = X - \mu_x$ and $Y' = Y - \mu_y$.

\item The non-linear function $f(x,y)$ used for generating different cross-correlators is passive and memory-less which implies that its output depends on the instantaneous values of $X$ and $Y$.  

\end{enumerate}

The paper is organized as follows: 
In section~\ref{sec:Analysis Framework}, we first propose the mathematical framework that can be used to analyze cross-correlators for a general class of non-linear function $f$ and for jointly Gaussian distributed inputs. We use the framework to analyze different types of estimators, which include the linear-rectifier cross-correlator, margin-propagation (MP) cross-correlators ~\cite{Chakrabartty2003MARGINPA}, Huber-type~\cite{Huber} cross-correlators, and log-sum-exponential (LSE)~\cite{LSEnetwork} estimators. In section~\ref{sec:extension to WHT}, we extend the framework to arbitrary input distributions based on the hyperdimensional mapping using the Walsh-Hadamard transformation. In section~\ref{sec:results}, we show experiments evaluating different correlators and the transformation method. Section~\ref{sec:Conclusions} discusses the advantages and disadvantages of different correlators and concludes the paper with a brief perspective on future directions.

\begin{figure*}[b]
    \centering
    \includegraphics[width=\textwidth]{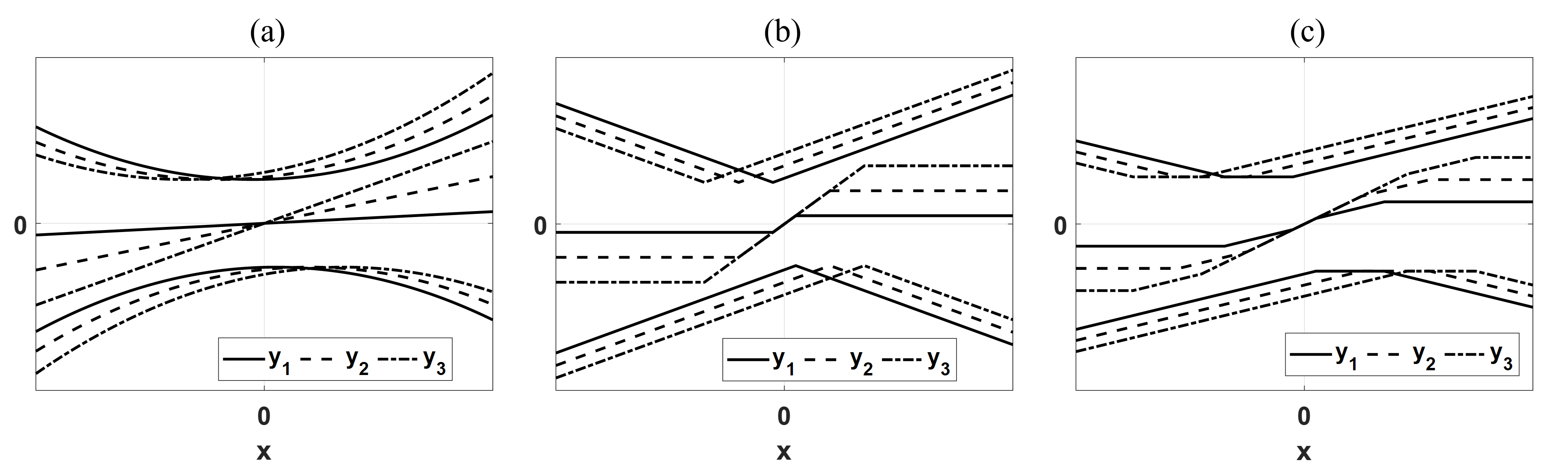}
    \caption{A conceptual scalar demonstration of a) empirical correlators, b) $L_1$ type correlators, and c) an example of correlators using mixtures of piece-wise linear functions with $L=1, w_1 = 1$, and $\alpha_1 = 0.5$. All correlators $f(x,y)$ can be viewed as a difference between $h(x+y)$ and $h(x-y)$. The upper three curves represent $h(x+y)$, the lower three curves represent $-h(x+y)$, and curves in the middle are $f(x,y)$ for different values of $y = y_1, y_2, y_3$. }
    \label{fig:scalar demonstration}
\end{figure*}


\section{Analysis Framework using Price's Theorem}
\label{sec:Analysis Framework}
In this section, we present an analytical framework that can be used to understand the behavior of different cross-correlators. A cross-correlator can be viewed as a difference between two functions, and in Fig.~\ref{fig:scalar demonstration}(a), we illustrate this for the empirical cross-correlator defined in equation~(\ref{eqn:empcorr}), which can be expressed as
\begin{equation}
\frac{1}{N}\sum_{n=1}^{N} x_n y_n = \frac{1}{4N}\sum_{n=1}^{N}(x_n + y_n)^2 - (x_n - y_n)^2.
\label{eqn:quadratic functions}
\end{equation}

The symmetric quadratic functions $(x+y)^2$ and $-(x-y)^2$ are shown in Fig.~\ref{fig:scalar demonstration}a, which results in the product $xy$ when summed together. When extended to $N$ dimensions, the quadratic functions in equation~(\ref{fig:scalar demonstration}) become $L_2$ distances $||\x+\y||_2^2$ and $||\x-\y||_2^2$ and their difference is proportional to the empirical cross-correlation $\frac{1}{N}\sum_{n=1}^N x_n y_n$. The concept can be generalized to other norms~\cite{AdderNet}. As an example, it has been suggested that substituting the quadratic component with linear rectifiers can yield a comparable cross-correlator~\cite{simplecorrelator,PriceExtension} which uses $L_1$ distances $|\x+\y|$ and $|\x-\y|$. Typically, the linear rectifier cross-correlator is considered a reasonable approximation of the empirical cross-correlator. In Fig.~\ref{fig:scalar demonstration}b, we show the equivalent construction for the $L_1$ type cross-correlators using $L_1$ distances $|\x+\y|$ and $|\x-\y|$. 

Both $L_2$ and $L_1$ constructions can be viewed as special cases of mixtures of piece-wise linear functions as shown in Fig.~\ref{fig:scalar demonstration}c and can be expressed as
\begin{equation}
    f(x,y) = h(x+y) - h(x-y),
    \label{eqn:f}
\end{equation}
where
\begin{equation}
	h(x) = \frac{1}{2}\sum_{l=1}^{L} w_l \left(\left|x - \alpha_l \right| + \left|x + \alpha_l \right| \right) ,
	\label{eqn:h}
\end{equation}
with parameters $\alpha_l \ge 0$, $w_l \ge 0; \sum_l w_l = 1$ and $\left|\cdot\right|$ is an absolute-value function defined as
\begin{equation}
	\left|x\right| = \left\{ \begin{array}{lcc}
		x   & ; &  x \ge 0 \\
		-x  & ; &  x < 0.
	\end{array} \right.
	\label{eq:absolute}
\end{equation}
We now state the Lemma that can be used to compute the function $g(R) = \mathcal{E}[f(x,y)]$.

\begin{lem}\label{lem1}
If the function $f(x,y)$ is given by equation~(\ref{eqn:f}) and~(\ref{eqn:h}) and the joint probability distribution of $X \in \mathbb{R}$ and $Y \in \mathbb{R}$ is given by
\begin{equation}
	p(X=x,Y=y; R) = \frac{1}{2\pi\sqrt{1-R^2}} \exp \left[-\frac{x^2 + y^2 - 2R x y}{2\left(1- R^2\right)} \right],
	\label{eqn:pdf}
\end{equation}
where $R \in [-1,1]$ is the Pearson's correlation/cross-correlation between $x$ and $y$,
then $g(R)=\mathcal{E}[f(X,Y)]$ is given by 
\begin{equation}
\begin{split}
	g(R) = \frac{1}{2\sqrt{\pi}}\sum_{l=1}^{L} \int_0^R  \frac{w_l}{\sqrt{(1+\rho)}}  \exp\left[-\frac{\alpha_l^2}{4\left(1+\rho\right)}\right]  + \frac{w_l}{\sqrt{(1-\rho)}} \exp\left[-\frac{\alpha_l^2}{4\left(1-\rho\right)}\right]  d\rho.
 \end{split}
    \label{eqn:lemma1 relation}
\end{equation}
\end{lem}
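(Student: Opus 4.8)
The plan is to apply Price's Theorem, which for zero-mean, unit-variance, jointly Gaussian variables $(X,Y)$ with correlation coefficient $\rho$ asserts that $\frac{d}{d\rho}\,\mathcal{E}[f(X,Y)] = \mathcal{E}\bigl[\partial_x\partial_y f(X,Y)\bigr]$ whenever the expectations involved are finite. Since $g(R)=\mathcal{E}[f(X,Y)]$ with $\rho=R$, this reduces the lemma to three steps: (i) compute the mixed second partial derivative of $f$; (ii) evaluate its Gaussian expectation as a function of $\rho$; and (iii) integrate the result from $\rho=0$ to $\rho=R$, using the boundary value $g(0)=0$.

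For step (i), since $f(x,y)=h(x+y)-h(x-y)$ the chain rule gives $\partial_x\partial_y f = h''(x+y)+h''(x-y)$, where the derivatives of $h$ are understood distributionally: $h'(x)=\frac12\sum_l w_l\bigl(\sgn(x-\alpha_l)+\sgn(x+\alpha_l)\bigr)$, so $h''$ is a weighted sum of Dirac masses at the points $\pm\alpha_l$. Consequently $\partial_x\partial_y f$ is, for each $l$, a sum of Dirac masses supported on the four lines $x+y=\pm\alpha_l$ and $x-y=\pm\alpha_l$, each with weight proportional to $w_l$.

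For step (ii), the expectation of $\delta(X+Y-a)$ under the joint Gaussian is the marginal density of $X+Y$ at $a$. Since $X+Y$ is $N\bigl(0,2(1+\rho)\bigr)$ and $X-Y$ is $N\bigl(0,2(1-\rho)\bigr)$ --- and in fact these two are uncorrelated, hence independent --- each Dirac contributes $\frac{1}{\sqrt{4\pi(1\pm\rho)}}\exp\bigl[-\alpha_l^2/(4(1\pm\rho))\bigr]$; pairing the $+\alpha_l$ and $-\alpha_l$ terms (whose densities agree by symmetry) yields $\frac{dg}{d\rho}$ as an explicit sum over $l$ of the two Gaussian-in-$\alpha_l$ terms appearing in the integrand of~(\ref{eqn:lemma1 relation}). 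At $\rho=0$ the variables $X+Y$ and $X-Y$ are identically distributed, so $\mathcal{E}[h(X+Y)]=\mathcal{E}[h(X-Y)]$ and $g(0)=0$; integrating $\frac{dg}{d\rho}$ from $0$ to $R$ then produces the stated formula.

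The main obstacle is not the algebra but the rigor: $h$ is only piecewise linear, so $f$ fails to be twice differentiable classically and Price's Theorem cannot be invoked verbatim. I would resolve this by mollification --- replacing each $|x\pm\alpha_l|$ with a smooth convex approximant, applying Price's Theorem to the resulting smooth correlator $f_\varepsilon$, and passing to the limit $\varepsilon\to 0$; the linear growth of $h$ together with the Gaussian tails makes the relevant integrands dominated, so dominated convergence transfers both $\mathcal{E}[f_\varepsilon]\to g$ and the integrated identity. An equivalent delta-free route, which also doubles as a verification, is to evaluate $\mathcal{E}[h(X+Y)]$ and $\mathcal{E}[h(X-Y)]$ in closed form via the folded-normal mean $\mathcal{E}\,|N(\mu,\sigma^2)| = \sigma\sqrt{2/\pi}\,e^{-\mu^2/(2\sigma^2)} + \mu\,\mathrm{erf}\bigl(\mu/(\sigma\sqrt2)\bigr)$ with $\sigma^2=2(1\pm R)$, and then differentiate in $R$: the error-function terms cancel, leaving exactly the integrand of~(\ref{eqn:lemma1 relation}).
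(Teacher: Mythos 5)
Your proposal is correct and follows essentially the same route as the paper's proof: Price's theorem, the distributional identity $\partial_x\partial_y f = h''(x+y)+h''(x-y)$ with $h''$ a sum of Dirac masses at $\pm\alpha_l$, evaluation of each delta's expectation as the marginal density of $X\pm Y\sim N(0,2(1\pm\rho))$, and integration from $\rho=0$ using $g(0)=0$; your mollification argument and folded-normal cross-check supply rigor that the paper's proof simply asserts. One remark worth keeping: carrying the constants through exactly as you set them up (each $|u\mp\alpha_l|$ contributes $2\delta(u\mp\alpha_l)$, so $h''(u)=\sum_l w_l[\delta(u-\alpha_l)+\delta(u+\alpha_l)]$) yields $\frac{dg}{d\rho}=\sum_l \frac{w_l}{\sqrt{\pi(1+\rho)}}e^{-\alpha_l^2/(4(1+\rho))}+\frac{w_l}{\sqrt{\pi(1-\rho)}}e^{-\alpha_l^2/(4(1-\rho))}$, which agrees with the paper's own equation~(\ref{eqn:finalprice}) and with the closed form $g_{L1}(R)=\frac{2}{\sqrt{\pi}}(\sqrt{1+R}-\sqrt{1-R})$ of Example~1, but is \emph{twice} the integrand displayed in the lemma statement~(\ref{eqn:lemma1 relation}) --- i.e., the prefactor $\frac{1}{2\sqrt{\pi}}$ there should be $\frac{1}{\sqrt{\pi}}$, a typo that your folded-normal verification route would have caught.
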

\begin{proof}
Since $f$ is a memory-less function with a well-defined Fourier transform and $X$ and $Y$ are zero-mean, unit variance, jointly distributed Gaussian random variables, we can apply Price's theorem~\cite{Price1,PriceExtension,CommentOnExtension} which states that
\begin{equation}
	\frac{\partial \mathcal{E}(f)}{\partial R} = \mathcal{E}\left[\frac{\partial^2 f}{\partial x \partial y}\right]
	\label{eq:price1},
\end{equation}
where the expectation operator $\mathcal{E}$ is defined as
\begin{equation}
	\mathcal{E}\left(f\right) = \int_{-\infty}^{\infty} \int_{-\infty}^{\infty} f(x,y)p(x,y; R)dx dy = g(R).
	\label{eq:pdf2}
\end{equation}
The partial derivatives of the sub-functions $h(x+y),h(x-y)$ in~(\ref{eqn:h}) are 
\begin{subequations}
\label{eqn:sub_sec_order_diff}
\begin{equation}
	\frac{\partial^2 h(x+y)}{\partial x \partial y} = \frac{1}{2}\sum_{l=1}^{L} w_l\left[\delta\left( x + y - \alpha_l \right) + \delta\left( x + y + \alpha_l \right)\right],
	\label{eqn:sec_order_diffp}
\end{equation}
\begin{equation}
	-\frac{\partial^2 h(x-y)}{\partial x \partial y} =  \frac{1}{2}\sum_{l=1}^{L} w_l\left[\delta\left( x - y - \alpha_l \right) + \delta\left( x - y + \alpha_l \right) \right],
	\label{eqn:sec_order_diffn}
\end{equation}
\end{subequations}
where $\delta(.)$ denotes the Dirac-delta function. The expectation operators can be computed as follow 
\begin{subequations}
\begin{align}
\mathcal{E} \left[ \frac{\partial^2  h(x+y)}{\partial x \partial y} \right] \nonumber 
&= \frac{1}{2}\sum_{l=1}^{L} w_l \int_{-\infty}^{\infty}  p(x,-x +\alpha_l) +  p(x ,-x -\alpha_l) dx
\label{eqn:Expected_sub_sec_order_diffp}
 \\
 &=\frac{1}{\sqrt{\pi(1+R)}} \sum_{l=1}^{L} w_l\exp \left[-\frac{\alpha_l^2}{4\left(1 + R\right)} \right]
 \\
-\mathcal{E} \left[ \frac{\partial^2  h(x-y)}{\partial x \partial y} \right] \nonumber 
&= \frac{1}{2}\sum_{l=1}^{L} w_l \int_{-\infty}^{\infty}  p(x,x -\alpha_l) +  p(x ,x +\alpha_l) dx
 \\
 &=\frac{1}{\sqrt{\pi(1-R)}} \sum_{l=1}^{L} w_l\exp \left[-\frac{\alpha_l^2}{4\left(1 - R\right)} \right].
\label{eqn:Expected_sub_sec_order_diffn}
 \end{align} \label{eqn:Expected_sub_sec_order_diff}
\end{subequations}
Substituting the results into~(\ref{eqn:f}) and~(\ref{eq:price1}) will get
\begin{equation}
\begin{split}
	\frac{\partial g}{\partial R} = \frac{1}{\sqrt{\pi(1+R)}}  \sum_{l=1}^{L} w_l\exp\left[-\frac{\alpha_l^2}{4\left(1+R\right)}\right] + \frac{1}{\sqrt{\pi(1-R)}} \sum_{l=1}^{L} w_l\exp\left[-\frac{\alpha_l^2}{4\left(1-R\right)}\right],
 \end{split}
 \label{eqn:dervprice}
\end{equation}
which leads to the expression for $g(R)$,
\begin{equation}
\begin{split}
    g(R) =  \int_{0}^{R} \frac{1}{\sqrt{\pi(1+\rho)}}  \sum_{l=1}^{L} w_l\exp\left[-\frac{\alpha_l^2}{4\left(1+\rho\right)}\right]d\rho
    + \int_{0}^{R}\frac{1}{\sqrt{\pi(1-\rho)}} \sum_{l=1}^{L} w_l\exp\left[-\frac{\alpha_l^2}{4\left(1-\rho\right)}\right] d\rho.
     \end{split}
    \label{eqn:finalprice}
\end{equation}
\end{proof}

{\bf Example 1:} When $L = 1, w_1 = 1, \alpha_1 = 0$, the function $f$ is reduced to 
\begin{equation}
    f(x,y) = |x + y| - |x - y|,
    \label{eqn:linear rectified correlator}
\end{equation}
which is the well-studied linear rectifier correlator. In this case, the relation~(\ref{eqn:lemma1 relation}) can be evaluated in closed form and is given by
\begin{equation}
    g_{L1}(R) = \frac{2}{\sqrt{\pi}}(\sqrt{1+R} - \sqrt{1-R}).
    \label{eqn:linear rectified output}
\end{equation}

{\bf Example 2:} When $w_l = 1/L, \alpha_l = c/L, l = 1,..,L$ and $c,L \rightarrow \infty$,  the function $f$ is reduced to 
\begin{equation}
    f(x,y) =  \frac{1}{2c}(\left(x + y\right)^2 -  \left(x - y\right)^2),
    \label{eqn:empirical correlator}
\end{equation}
where $c>|x|$ is the range of inputs. So the function $f$ becomes the empirical correlator. In this case, the summation in the relation~(\ref{eqn:lemma1 relation}) can be replaced by integrals in the limit $L \rightarrow \infty$ in which case 
\begin{eqnarray}
    \frac{\partial g_{L2}}{\partial R} = \frac{1}{c\sqrt{\pi(1+R)}} \int_0^{\infty} \exp\left[-\frac{x^2}{4\left(1+R\right)}\right] dx + \frac{1}{c\sqrt{\pi(1-R)}} \int_0^{\infty} \exp\left[-\frac{x^2}{4\left(1-R\right)}\right] dx = \frac{2}{c}. 
    \label{eqn:empiricalcorrelator}
\end{eqnarray}
Therefore, $g_{L2}(R) = \frac{2}{c}R$ matches the result for a scaled empirical cross-correlation. 

\begin{figure*}[t]
  \centering
         \includegraphics[width=0.85\textwidth]{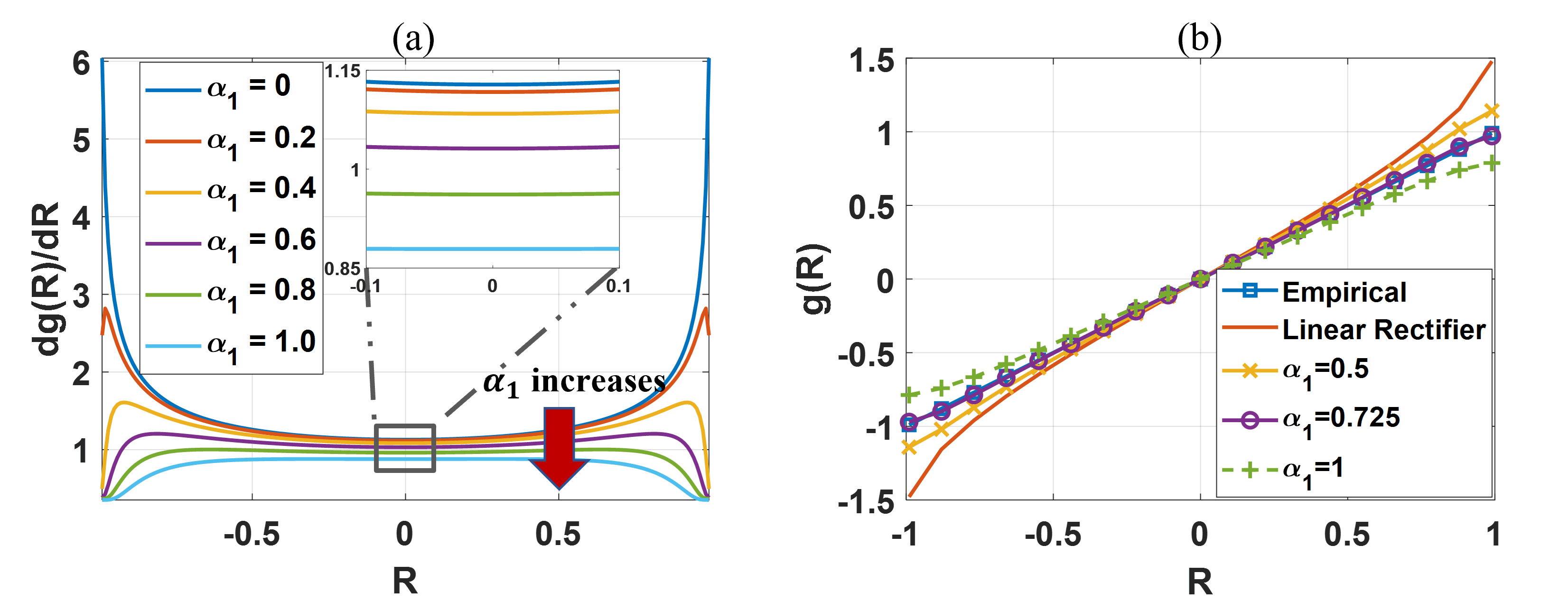}        \caption{a):$\frac{\partial g}{\partial R}$ for $L = 1, w_1=1$, and $\alpha_1 = \{0,0.2,0.4,0.6,0.8,1.0\}$; b) The $g(R)$ of the empirical and linear rectifier correlators and the $g(R)$ derived by integrating $\frac{\partial g}{\partial R}$ for $L = 1, w_1=1$, and $\alpha_1 = 0.5, 0.725$ and 1. It demonstrates how the analysis framework can be used to construct varying correlators with different output characteristics.}
     \label{fig:dev_offsets}
\end{figure*}

The above examples show how specific cross-correlators can be constructed using Lemma~\ref{lem1}. Even though the equation~(\ref{eqn:lemma1 relation}) may not be solved in closed-form, it is analytical and hence can be used to visualize the form of $g(R)$ for specific choices of $\alpha_l$ and $w_l$. Fig.~\ref{fig:dev_offsets}a visualizes the partial derivative $\frac{\partial g}{\partial R}$ in~(\ref{eqn:dervprice}) when $L = 1, w_1=1$, and $\alpha_1 = \{0,0.2,0.4,0.6,0.8,1\}$. The zoomed-in window demonstrates that the partial derivatives stay almost constant around zero correlation, which implies the linearity of $g(R)$ around zero correlation for any cross-correlators. Fig.~\ref{fig:dev_offsets}b displays the $g(R)$ of both empirical and linear rectifier correlators and the $g(R)$ for different values of a single offset $\alpha_1 = 0.5,0.725, 1$ computed by integration. The results will be verified by the Monte Carlo experiment for MP correlators in section~\ref{sec:results}.

Let's denote $\mathcal{E}\left(f\right)$ or $g(R)$ by $y$, we can derive the following from $g_{L1}(R)$ in~(\ref{eqn:linear rectified output})
\begin{equation}
    R^2 = \frac{\pi}{4} y^2 - \frac{\pi^2}{64}y^4. 
\end{equation}
This suggests that $g^{-1}$ in Fig.~\ref{fig:Motivation1} can be robustly estimated using a polynomial expansion with a relatively low degree. This is important since the closed-form solution for equation~(\ref{eqn:finalprice}) may not be computed for different choices of $w_l,\alpha_l$. As a result, $g^{-1}$ has to be learned/estimated by drawing samples with known apriori cross-correlation, which is then used to estimate $R$ according to Fig.~\ref{fig:Motivation1}. As we will show later in section~\ref{sec:extension to WHT}, this calibration procedure and procedure to estimate $R$ can be agnostic to the input distribution.

We now apply the calibration procedure to three other types of functions of the type given by expression~(\ref{eqn:f}). The first is the margin-propagation (MP) function 
given by 
\begin{align}
\label{eqn:MP}
    (x - z)_+ + (-x-z)_+ &= \gamma, \\
    h(x) &= z,
\end{align}
where $(\cdot)_+$ is the ReLU function, and $\gamma \ge 0$ is the hyper-parameter. It can be easily verified that the MP function in~(\ref{eqn:MP}) is equivalent to~(\ref{eqn:h}) with $L=1$ and $\alpha_1 = 0.5\gamma$.

The second function is the Huber function which requires an infinite number of splines and is given by
\begin{equation}
h(x) =
\begin{cases}
& 0.5x^2/\delta, \enskip x < \delta \\
& |x| - \frac{1}{2}\delta, \enskip x \geq \delta,
\end{cases}
\end{equation}
where $\delta > 0$ is a threshold parameter. 

The third function is a log-sum-exp (LSE) function which also requires an infinite number of splines and is given by
\begin{equation}
\label{eqn:LSE_h}
h(x) = \frac{1}{a}(\log(\exp\left[ax\right] + \exp\left[-ax\right]),
\end{equation}
where $a > 0 $ is a scaling factor. 

As we will show in section~\ref{sec:results}, the Huber and LSE correlators fall between the empirical and linear rectifier correlators, hence the inverse cross-correlation function $g^{-1}$ can be approximated by a polynomial of degree lower or equal to the degree needed for $g_{L1}^{-1}$. In practice, we found that fourth order polynomial is sufficient for calibration of $g_{L1}^{-1}$. For calibrating MP correlators, higher degree polynomials were needed as the value of $\gamma$ increases as its $|\frac{\partial g}{\partial R}|$ is not monotonically increasing as correlation increases. As such, a fifth-order polynomial was used to learn the inverse cross-correlation function $g^{-1}$.

\section{Extension to non-Gaussian distributions}
\label{sec:extension to WHT}
The theoretical results presented in section~\ref{sec:Analysis Framework} assumed random variables with joint Gaussian distributions. In this section, we extend the previous results for non-Gaussian distributions. To achieve this, we use results from the hyperdimensional computing literature, which state that variances and cross-correlations are preserved when random variables are mapped into high-dimensional space using unitary random matrices. 

\begin{lem}\label{lem3}
Let $X$ and $Y$ be zero-mean random variables with unit variance and with a cross-correlation $R$. Let $\Phi: \mathbb{R}^N \rightarrow \mathbb{R}^M$ denote a high-dimensional embedding using a Unitary transform such that $\mathcal{E}[\Phi(\x)] = \mathbf{0}$. Then, as $N \rightarrow \infty$, $\frac{1}{N}\left<\Phi(\x), \Phi(\y)\right> \rightarrow  R$, where $\left< \cdot, \cdot\right>$ is the inner product.
\end{lem}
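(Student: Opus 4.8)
The plan is to collapse the high-dimensional embedding to a plain inner product and reduce the claim to the elementary convergence already recorded in \eqref{eqn_empconverge}. The key observation is that a unitary transform is an isometry: writing $\Phi$ also for its matrix, $\Phi^\top\Phi = \mathbf{I}_N$, so that for every $\x,\y \in \mathbb{R}^N$,
\begin{equation}
\langle \Phi(\x),\Phi(\y)\rangle \;=\; \x^\top\Phi^\top\Phi\,\y \;=\; \x^\top\y \;=\; \sum_{n=1}^N x_n y_n ,
\end{equation}
independently of which unitary is chosen. Dividing by $N$, the quantity $\frac{1}{N}\langle \Phi(\x),\Phi(\y)\rangle$ is therefore \emph{literally} the empirical cross-correlation estimator $\hat{R}_N$ of \eqref{eqn:empcorr}: the re-encoding into $\mathbb{R}^M$ costs nothing as far as the inner product is concerned. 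I would also note that the hypothesis $\mathcal{E}[\Phi(\x)] = \mathbf{0}$ is automatic in this linear setting, since $\mathcal{E}[\x] = \mathbf{0}$ by the zero-mean assumption on $X$; it is stated only because it guarantees that the expected inner product of the embedded coordinates still equals their covariance, so that naming the limit $R$ remains legitimate in the embedded space.

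Given the identity above, the conclusion is exactly \eqref{eqn_empconverge}. The pairs $(x_n,y_n)$ are i.i.d.\ copies of $(X,Y)$; since $X$ and $Y$ are zero-mean with $\mathcal{E}[X^2]=\mathcal{E}[Y^2]=1$, the Cauchy--Schwarz inequality gives $\mathcal{E}|XY|\le 1<\infty$, and $\mathcal{E}[XY]=R$ is precisely their cross-correlation (equivalently Pearson's coefficient under the unit-variance normalization, consistent with Lemma~\ref{lem1}). The law of large numbers then gives $\frac{1}{N}\sum_{n=1}^N x_n y_n \to R$, and combining this with the displayed identity yields $\frac{1}{N}\langle\Phi(\x),\Phi(\y)\rangle \to R$.

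The argument is short, so I do not expect a substantive obstacle; the only point requiring care is the interpretation of ``Unitary transform'' when $M>N$. The reading I would adopt is that $\Phi$ is any linear isometry --- for instance zero-padding $\x$ to length $M$ and then applying a normalized Walsh--Hadamard (or random orthogonal) matrix --- for which $\Phi^\top\Phi=\mathbf{I}_N$ holds exactly, so that the inner-product identity is an equality rather than an approximation. (If one only asked for an approximate isometry of Johnson--Lindenstrauss type, an additional concentration argument would be needed, but that is not what the lemma claims.) The content of the lemma is precisely this exactness: it licenses running the Gaussian cross-correlator analysis of Lemma~\ref{lem1} on the (approximately Gaussian) embedded coordinates in the next section without distorting the cross-correlation being estimated.
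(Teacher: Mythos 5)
Your proposal is correct and follows essentially the same route as the paper: the unitary (isometric) embedding preserves the inner product, so $\frac{1}{N}\langle\Phi(\x),\Phi(\y)\rangle$ reduces to the empirical estimator $\frac{1}{N}\sum_n x_n y_n$, which converges to $R$ by the law of large numbers. Your added remarks on integrability via Cauchy--Schwarz and on reading ``unitary'' as a linear isometry when $M>N$ only make explicit what the paper leaves implicit.
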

\begin{proof}
     Suppose $\x$ and $\y$ are $\mathbb{R}^N$-valued random vector, and each entry $x_n$, $y_n$ are independently identically distributed (i.i.d) variables with the joint probability density function $p(X=x_n,Y=y_n; R)$
    \begin{align}
    \label{eqn:transformed_R}
    \frac{1}{N}\left<\Phi(\x), \Phi(\y)\right> =  \frac{1}{N}\left<\x, \y\right> = \frac{1}{N}\sum_{n=1}^{N} x_n y_n \stackrel{N \rightarrow \infty}{\longrightarrow} R.
    \end{align}
\end{proof}

The Walsh-Hadamard-Transform is one such unitary transform $\mathcal{H}: \mathbb{R}^{N} \rightarrow \mathbb{R}^{N}$ and it can be represented by a $N \times N$  Hadamard matrix. The transformation generally requires the input zero-padded to a power of two. An example $4 \times 4$ WHT matrix is shown below
\begin{equation}
H_4 = \frac{1}{2}\begin{pmatrix}
1 & 1 & 1 & 1 \\
1 & -1 & 1 & -1 \\
1 & 1 & -1 & -1 \\
1 & -1 & -1 & 1
\end{pmatrix}.
\label{eqn:hadamard matrix}
\end{equation}
The Hadamard matrices are orthogonal and symmetric matrices composed of +1 and -1 with a normalization factor $1/\sqrt{N}$, which makes it easy for implementations and computations. Besides keeping the covariance between random variables, it can also be shown that the transformed zero-mean variables converge to the joint Gaussian distribution with the same variance and covariance.

\begin{lem}\label{lem4}
Let $X$ and $Y$ be zero-mean random variables with finite variances and the cross-correlation $R$. Let $\mathcal{H}: \mathbb{R}^{N} \rightarrow \mathbb{R}^{N}$ denote the Walsh-Hadamard-Transform. Suppose the entries of the vector equation $\bf{x'} = \mathcal{H}(\bf{x})$ are given by $x_n' = h_n(x_1, ..., x_n)$, and $h_n(\cdot)$ is therefore
\begin{equation}
        h_n(x_1, ..., x_n) =
\begin{cases}
& \frac{1}{\sqrt{N}}\sum_{n=1}^{N}x_n,  \enskip \enskip \enskip n = 1, \\
& \frac{1}{\sqrt{N}}(\sum_{n=1}^{N/2}x_n - \sum_{n=1}^{N/2}x_n), \enskip n \neq 1,
\end{cases}
\end{equation}

where the $x_n$ are independently identically distributed (i.i.d.) samples from $X$. Then, as $N \rightarrow \infty$, the joint probability distribution $p(x_n', y_n')$ converges to a bivariate Gaussian distribution with zero-mean, same variances $\sigma^2_x$, $\sigma^2_y$, and covariance $R$.
\end{lem}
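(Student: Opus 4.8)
The plan is to establish Lemma~\ref{lem4} as a multivariate Central Limit Theorem applied to the normalized Walsh--Hadamard coordinates. First I would observe that each coordinate $x_n'$ of $\mathbf{x}' = \mathcal{H}(\mathbf{x})$ is of the form $\frac{1}{\sqrt{N}}\sum_{k=1}^N \epsilon_{nk} x_k$, where $\epsilon_{nk} \in \{+1,-1\}$ are the (deterministic) Hadamard sign patterns of the $n$-th row, and similarly $y_n' = \frac{1}{\sqrt{N}}\sum_{k=1}^N \epsilon_{nk} y_k$ using the \emph{same} row (since we want the joint law of the pair $(x_n', y_n')$ for a fixed index $n$). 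The pairs $(x_k, y_k)$ are i.i.d.\ with zero mean, variances $\sigma_x^2, \sigma_y^2$, and covariance $\mathcal{E}[X Y] = R$ (using the unit-variance normalization of Assumption~1, so that $R$ is both the covariance and the correlation). The key point is that $\epsilon_{nk}^2 = 1$ for all $k$, so scaling by $\epsilon_{nk}$ does not change second moments: $\mathrm{Var}(\epsilon_{nk} x_k) = \sigma_x^2$, $\mathrm{Cov}(\epsilon_{nk}x_k, \epsilon_{nk}y_k) = R$.

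Next I would set up the vector-valued sum $S_N = \frac{1}{\sqrt{N}}\sum_{k=1}^N \epsilon_{nk}\,(x_k, y_k)^\top$, a sum of $N$ independent (not identically distributed, because of the signs, but with identical covariance structure) mean-zero $\mathbb{R}^2$-valued random vectors. Its covariance matrix is exactly
\begin{equation}
\Sigma = \begin{pmatrix} \sigma_x^2 & R \\ R & \sigma_y^2 \end{pmatrix}
\end{equation}
for every $N$, independent of the sign pattern. I would then invoke the Lindeberg--Feller multivariate CLT (or equivalently the Cram\'er--Wold device reducing to the scalar Lindeberg CLT applied to $\langle \theta, S_N\rangle$ for arbitrary $\theta \in \mathbb{R}^2$). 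Since $X$ and $Y$ have finite variances, the Lindeberg condition holds for the triangular array $\{\frac{1}{\sqrt N}\epsilon_{nk}(x_k,y_k)\}_{k=1}^N$: the per-term contribution is $O(1/N)$ in variance and the sign flips are bounded, so the truncated-second-moment sum vanishes as $N \to \infty$ by dominated convergence. Hence $S_N = (x_n', y_n')^\top \Rightarrow \mathcal{N}(\mathbf{0}, \Sigma)$, which is precisely the claimed bivariate Gaussian limit.

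The main obstacle --- and the place where some care is needed --- is the subtlety that the $N$ coordinates $\{(x_n', y_n')\}_{n=1}^N$ are \emph{not} mutually independent (they are all built from the same underlying $\{(x_k,y_k)\}$, merely with orthogonal sign patterns), so one cannot claim joint asymptotic normality of the whole transformed vector without additional argument; the lemma as stated only asserts the marginal (per-coordinate) bivariate convergence of $p(x_n', y_n')$, which is what the CLT above delivers directly. A secondary point to address is the first coordinate $n=1$ (all signs $+1$), which is the ordinary normalized sum and is handled by the same CLT; and the statement's case split with indices $n=1$ versus $n\neq 1$ should be read merely as shorthand for ``$x_n'$ is $\frac{1}{\sqrt N}\sum_k \epsilon_{nk} x_k$ with the appropriate row of signs,'' which is what makes the uniform covariance computation go through. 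I would close by remarking that combining this with Lemma~\ref{lem3} justifies the use of the Gaussian-input analysis of Section~\ref{sec:Analysis Framework} on Walsh--Hadamard-embedded data drawn from arbitrary finite-variance distributions.
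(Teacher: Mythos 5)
Your proof is correct, and it is in the same spirit as the paper's (both are Central Limit Theorem arguments), but the route is genuinely different and, in fact, more careful. The paper's proof reads the stated $h_n$ literally as either the full normalized sum ($n=1$) or a ``first half minus second half'' combination ($n\neq 1$), rewrites $x_n'$ as $\sqrt{N}\bar{x}_N$ or $\tfrac{1}{\sqrt{2}}\bigl(\sqrt{N/2}\,\bar{x}^{(1)}_{N/2}-\sqrt{N/2}\,\bar{x}^{(2)}_{N/2}\bigr)$, applies the i.i.d.\ bivariate CLT to each half separately, and then asserts that the normalized difference of the two (independent) halves is again bivariate Gaussian with the same covariance. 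You instead treat $x_n'=\frac{1}{\sqrt N}\sum_k\epsilon_{nk}x_k$ for an arbitrary Hadamard sign row, note that $\epsilon_{nk}^2=1$ preserves all second moments and cross-moments, and invoke the Lindeberg--Feller triangular-array CLT via Cram\'er--Wold. This buys you two things the paper does not have: (i) it covers every row of the Hadamard matrix, not just the two special row shapes appearing in the lemma's (apparently typo-ridden) definition of $h_n$ --- note that as written, $\sum_{n=1}^{N/2}x_n-\sum_{n=1}^{N/2}x_n$ is identically zero, and general Hadamard rows have sign patterns far richer than a two-block split, so your reading is the one that actually matches $\mathcal{H}$; and (ii) it makes explicit why non-identically-distributed signed summands still satisfy the Lindeberg condition (the norms $\|\epsilon_{nk}(x_k,y_k)\|$ are unchanged, so the truncated-second-moment sum is the same as in the i.i.d.\ case and vanishes by dominated convergence). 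Your remark that the transformed coordinates $\{(x_n',y_n')\}_n$ are not mutually independent, so only per-coordinate convergence is being claimed, is also a point the paper leaves unstated. The paper's two-block decomposition is shorter where it applies, but your argument is the one that proves the lemma for the actual Walsh--Hadamard transform.
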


\begin{proof}
     Suppose $X$ and $Y$  are zero-mean random variables with finite variances and covariance $R$. According to the multivariate Central Limit Theorem (CLT)~\cite{Klenke}, as $N \rightarrow \infty$
    the joint distribution $p(\sqrt{N}\bar{x}_N, \sqrt{N}\bar{y}_N)$ 
    converges to bivariate Gaussian distribution with zero mean and the same variances and covariance R, where $\bar{x}_N = \frac{1}{N}\sum_{n=1}^{N}x_n$ is the average of $N$ independently identically distributed samples of $X$. 
    
    Notice that the transformed entry after WHT $x_n' = h_n(x_1, ..., x_n)$ can be expressed as 
\begin{equation}
        x_n' =
\begin{cases}
& \sqrt{N}\bar{x}_N,  \enskip \enskip \enskip n = 1, \\
& \frac{1}{\sqrt{2}}\sqrt{\frac{N}{2}}(\bar{x}_{N/2} - \bar{y}_{N/2}), \enskip n \neq 1.
\end{cases}
\end{equation}
For $n=1$, the CLT can be applied to the transformed input $x_1'$ and $y_1'$. For the case of $n \neq 1$, note that the CLT also applies to $\pm \sqrt{\frac{N}{2}}\bar{x}_{N/2}$ and $\pm \sqrt{\frac{N}{2}}\bar{y}_{N/2}$, so they become bivariate Gaussian with zero means, and same variances and covariance, and so is their sum divided by $\sqrt{2}$.
\end{proof}
As such, using the proof in section~\ref{sec:Analysis Framework}, it can be shown that for correlators that can be expressed by equations ~(\ref{eqn:f}) and~(\ref{eqn:h}), the expected output $\mathcal{E}[f(\mathcal{H}(\x), \mathcal{H}(\y))]$ is equal to $g(R)$. In other words, for non-Gaussian distributed variables with zeros mean and finite covariance $R$, we can first transform the inputs to jointly Gaussian distribution, which preserves the covariance, and then use the cross-correlator in section~\ref{sec:Analysis Framework} to estimate the cross-correlation $R$ using the transformed data and the same $g^{-1}$.

\section{Experiments Results and Analysis}\label{sec:results}
In the first part of the experiments, we validate the analytical results in section~\ref{sec:Analysis Framework} when the random variables are drawn from joint Gaussian distribution. For this specific case, a closed-form expression of the Cramér–Rao bound can be computed. This bound can then be used to evaluate the effectiveness of any unbiased cross-correlation estimators. We will then validate our analytical results for arbitrary input distributions.

\subsection{Cross-correlation Cramér–Rao bound}
\label{sec:min var estimator}
According to the Cramér–Rao bound (CRB)~\cite{CRbound}, the variance or standard deviation of any unbiased estimator (satisfying regularity conditions) is lower bounded by the reciprocal of the Fisher information~\cite{FisherOnTM}. The following lemma presents the Cramér–Rao bound of the correlation estimator under the standard bivariate normal distribution.

\begin{lem}\label{CRB_lem}
For a collection of $N$ independent and identically distributed (iid) bivariate variables $X \in \mathbb{R}$ and $Y \in \mathbb{R}$ drawn from the jointly Gaussian distribution defined by~(\ref{eqn:pdf})  the Cramér–Rao bound of the correlation estimator is given by
\begin{equation}
     \sigma^2(\hat{R}) \ge \frac{(1-R^2)^2}{N(1+R^2)}.
\end{equation}
\end{lem}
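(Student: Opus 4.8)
The plan is to compute the Fisher information $I(R)$ for a single bivariate Gaussian sample drawn from the density~(\ref{eqn:pdf}), and then invoke the Cram\'er--Rao bound, which for $N$ i.i.d.\ samples gives $\sigma^2(\hat R) \ge 1/(N\,I(R))$. Since the samples are i.i.d., the total Fisher information is $N$ times the single-sample information, so the entire problem reduces to evaluating
\[
I(R) = \mathcal{E}\!\left[\left(\frac{\partial}{\partial R}\log p(X,Y;R)\right)^2\right] = -\,\mathcal{E}\!\left[\frac{\partial^2}{\partial R^2}\log p(X,Y;R)\right],
\]
and then showing $I(R) = (1+R^2)/(1-R^2)^2$.

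First I would write out the log-likelihood explicitly from~(\ref{eqn:pdf}):
\[
\log p(x,y;R) = -\log(2\pi) - \tfrac{1}{2}\log(1-R^2) - \frac{x^2+y^2-2Rxy}{2(1-R^2)}.
\]
Then I would differentiate once with respect to $R$, using $\frac{d}{dR}\log(1-R^2) = -2R/(1-R^2)$ and carefully applying the quotient rule to the last term, whose numerator $x^2+y^2-2Rxy$ also depends on $R$. This gives the score function $\partial_R \log p$ as a rational function of $R$ with polynomial-in-$(x,y)$ coefficients. I would then either square the score and take expectations, or (cleaner) differentiate a second time to get $\partial_R^2 \log p$ and take the negative expectation. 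Either way, the computation needs the Gaussian moments under the given covariance structure: $\mathcal{E}[X^2]=\mathcal{E}[Y^2]=1$, $\mathcal{E}[XY]=R$, and the fourth moments $\mathcal{E}[X^4]=\mathcal{E}[Y^4]=3$, $\mathcal{E}[X^3Y]=\mathcal{E}[XY^3]=3R$, $\mathcal{E}[X^2Y^2]=1+2R^2$, all obtainable from Isserlis'/Wick's theorem. Substituting these moments and simplifying the resulting rational expression in $R$ should collapse to $(1+R^2)/(1-R^2)^2$.

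The main obstacle I anticipate is purely the bookkeeping in the second-derivative (or score-squared) calculation: the term $(x^2+y^2-2Rxy)/(2(1-R^2))$ produces several contributions when differentiated twice, and combining them over the common denominator $(1-R^2)^3$ while tracking the $x,y$-polynomial coefficients correctly is error-prone. A useful sanity check along the way is the $R=0$ case, where $I(0)$ should equal $1$, matching the claimed bound $\sigma^2(\hat R)\ge 1/N$; another is to verify that the cross terms linear in odd powers of $x$ or $y$ vanish in expectation by symmetry, which reduces the number of moments actually needed. Once $I(R)=(1+R^2)/(1-R^2)^2$ is established, the stated bound $\sigma^2(\hat R)\ge (1-R^2)^2/(N(1+R^2))$ follows immediately from the standard CRB inequality, and I would note the regularity conditions (smoothness of $p$ in $R$, interchangeability of differentiation and integration) are satisfied for $R\in(-1,1)$ since the Gaussian density is analytic in $R$ on that open interval.
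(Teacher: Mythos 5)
Your proposal is correct and follows essentially the same route as the paper: write the log-likelihood of~(\ref{eqn:pdf}), compute the single-sample Fisher information as $-\mathcal{E}[\partial^2 l/\partial R^2]$ using $\mathcal{E}[X^2]=\mathcal{E}[Y^2]=1$ and $\mathcal{E}[XY]=R$ to obtain $I(R)=(1+R^2)/(1-R^2)^2$, then invoke additivity of Fisher information over i.i.d.\ samples and the standard CRB. (Minor note: on the second-derivative route only second moments are needed, since $\partial_R^2 l$ remains quadratic in $(x,y)$; the fourth moments you list are only required if you square the score instead.)
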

\begin{proof}
For the distribution~(\ref{eqn:pdf}), the Fisher information for the correlation coefficient of a single pair of samples $(x,y)$ can be computed as,
\begin{equation}
I(R) = -\mathcal{E}\left[ \frac{\partial^2(l(x,y;R))}{\partial R^2}\right],
\end{equation}
where $l(x,y;R)$ represents the natural logarithm likelihood function of a single sample pair $(x,y)$ for the joint density function $p(X=x,Y=y;R)$ 
\begin{equation}
    l(x,y;R) = -\log(2\pi)-\log(\sqrt{1-R^2})-\frac{x^2 + y^2 - 2R x y}{2\left(1- R^2\right)}.
    \label{eqn:loglikelihood}
\end{equation}
From~(\ref{eqn:loglikelihood}) and using
$\mathcal{E}[X^2]=\mathcal{E}[Y^2] = 1$ and $\mathcal{E}[XY] = R$ leads to the Fisher's information metric
\begin{equation}
    I(R) = \frac{1 + R^2}{(1-R^2)^2}.
\end{equation}
For $N$ independent and identically distributed (iid) samples, the total Fisher information is the sum of information from each individual sample. Therefore, the Cramér–Rao bound is given by 
\begin{equation}
    \sigma^2(\hat{R}) \ge \frac{1}{I(R)} = \frac{(1-R^2)^2}{N(1+R^2)}.
\end{equation}
\end{proof}
In other words, the accuracy of all cross-correlators is upper bounded by the Fisher information. In section~\ref{sec:results}, we use the standard deviation of estimation errors $\sigma$ to assess the performance of cross-correlators.

\subsection{Monte-Carlo Experiments for Jointly Gaussian Inputs}\label{sec:true corr results}
This section presents the results using different correlators to estimate the covariance for standard jointly normal distribution. To study and compare their performance, vectors of different lengths are randomly sampled from the zero mean and unit variance Gaussian distribution. Each vector pair $\bf{s_1}, \bf{s_2} \in \mathbb{R}^N$ is mixed in the following way to generate a bivariate Gaussian distribution $\bf{x} = (\bf{x_1},\bf{x_2})$ with different correlation $R$ to learn and test the inverse cross-correlation function $g(R)$,
\begin{align}
    \bf{x_1} &= \bf{s_1} ,\\
    \bf{x_2} &= R \bf{s_1} + \sqrt{1-R^2}  \bf{s_2},\\
    \bf{x} &= (\bf{x_1}, \bf{x_2}) \sim N(\vec{0}, \Sigma), \Sigma = \begin{pmatrix}
1 & R \\
R  & 1
\end{pmatrix}.
\end{align}

\begin{figure}[b]
  \centering
         \includegraphics[width =\textwidth]{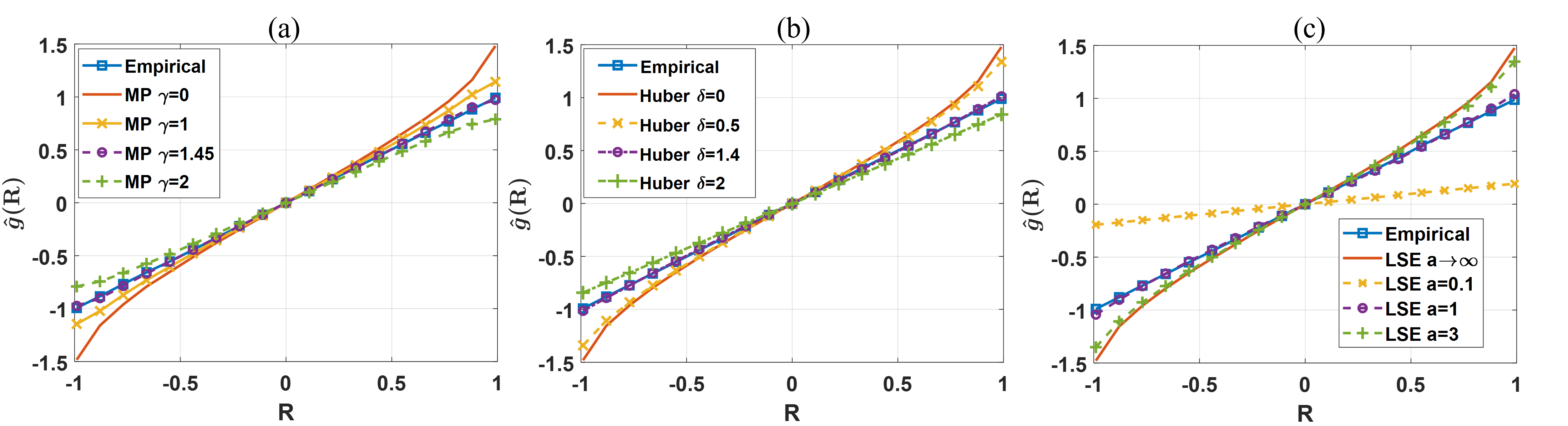}
         \caption{The estimated $\hat{g}(R)$ for the MP, Huber, and LSE functions for the standard bivariate normal distribution from Monte Carlo experiments. The expected outputs of the empirical and linear rectifier correlators are added for comparison. (a) The $\hat{g}(R)$ of MP correlators with $\gamma = \{0,1,1.45,2\}$; (b) The $\hat{g}(R)$ of Huber correlators with $\delta = \{0.5,1.4,2\}$; (c) The $\hat{g}(R)$ of LSE correlators with $a = \{0.1, 1, 3\}$.}
     \label{fig:gradient validation}
\end{figure}

Fig.~\ref{fig:gradient validation} shows the average output of cross-correlation function $\hat{g}(R)$ corresponding to the MP, Huber, and LSE functions with different parameters, which is used as an approximation of the correlation function $g(R)$ to get the calibration function $g^{-1}$. The output of the empirical correlator is shown for comparison. Note that the linear rectifier correlator is a special case for the MP, Huber, and LSE functions, which is included and labeled as "MP $\gamma=0$," "Huber $\delta=0$," and "LSE $a\rightarrow\infty$."
In fact, the normalized $g(R)$ for the Huber function and LSE functions are bounded above and below by the normalized  $g_{L1}(R)$ and $g_{L2}(R)$. This is easy to see for Huber functions, as it's a combination of the quadratic function ($L_2$) and absolute value function ($L_1$). For LSE functions, as the scaling factor $a$ increases, the $h(x)$ for LSE functions in expression~(\ref{eqn:LSE_h}) can be simplified to $\frac{1}{a}log(exp\left[a|x|\right] = |x|$ as the negative part will go to zero exponentially fast. On the other hand, as $a$ decreases, we can Taylor expand the exponential function at zero, which leads to
\begin{align}
h(x) = \frac{1}{a}\log(\sum_{n=0}^{\infty}\frac{(ax)^n}{n!} + \frac{(-ax)^n}{n!} ). 
\end{align}
The odd-degree terms cancel each other out and higher-order terms decay fast, which leads to 
\begin{align}
h(x) \approx \frac{1}{a}\log(2 + (ax)^2 ).
\end{align}
Applying the same trick to the logarithm but expanding at 2, we have
\begin{align}
    h(x) \approx \frac{1}{a} (\log(2) + \frac{1}{2}(ax)^2 + ...) \approx \frac{1}{a} + \frac{1}{2}ax^2.
\end{align}
Therefore, the LSE correlator approaches the empirical correlator as $a$ decreases.

\begin{figure*}[t]
  \centering
         \includegraphics[width=\textwidth]{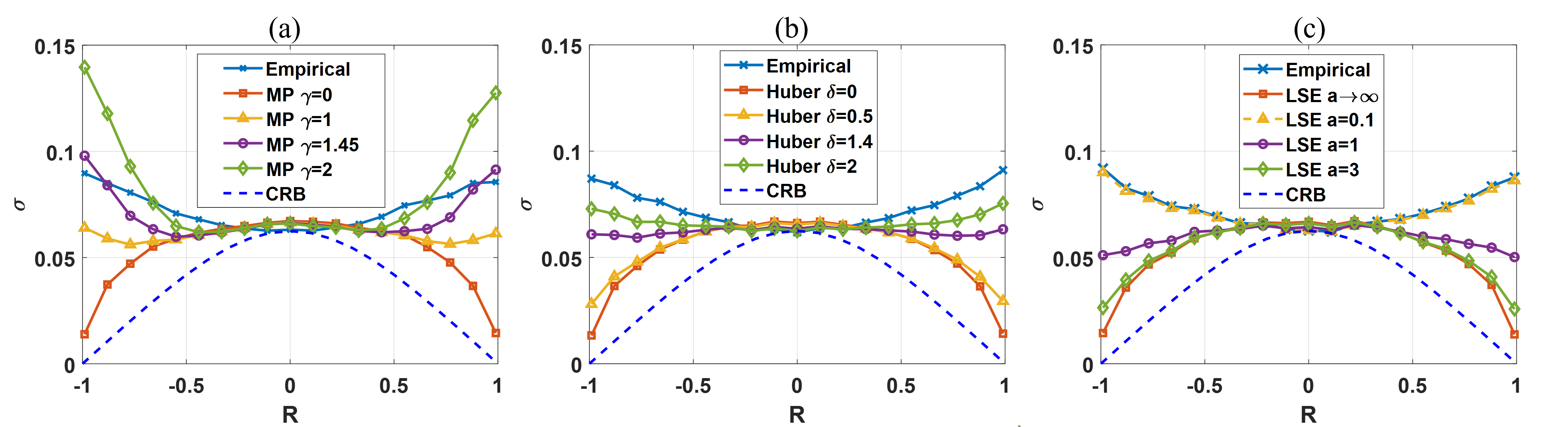}
         \caption{The standard deviation of the cross-correlation estimation error of MP, Huber, and LSE correlators for different correlation levels and the CRB at the dimension of 256, with the SNR of empirical and linear rectifier added for reference. (a) The error plot of MP correlators with $\gamma = \{1,1.45,2\}$; (b) The error plot of Huber correlators with $\sigma = \{0.5,1.4,2\}$; (c) The error plot of LSE correlators with $a = \{0.1, 1, 3\}$.}
     \label{fig:residue plots}
\end{figure*}

\begin{figure*}[b]
  \centering
\includegraphics[width=\textwidth]{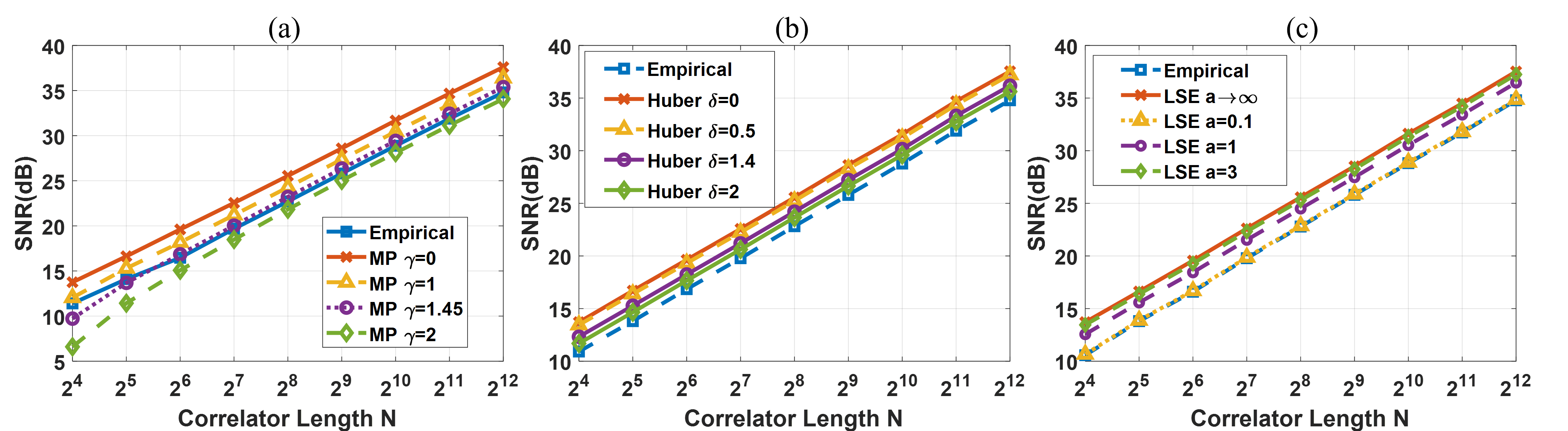}
         \caption{The SNR plots of MP, Huber, and LSE correlators from correlator length of 16 to 65536, with the SNR of empirical and linear rectifier correlators as a reference. a) The SNR plot of MP correlators with $\gamma = \{1,1.45,2\}$; b) The SNR plot of Huber correlators with $\sigma = \{0.5,1.4,2\}$; c) The SNR plot of LSE correlators with $a = \{0.1, 1, 3\}$. }
     \label{fig:SNR plot}
\end{figure*}
As discussed in section~\ref{sec:min var estimator}, the standard deviation of estimation can be used to evaluate the performance of correlators. Since the expected value of $g^{-1}$ is equal to $R$, the standard deviation of estimation $\sigma(\Bar{R})$ is equivalent to the standard deviation of estimation error $\sigma(R - g^{-1}(f))$. In Fig.~\ref{fig:residue plots}, we display the standard deviation of cross-correlation estimation errors and its CRB lower bound, denoted by $\sigma$, made by the MP, Huber, and LSE correlators with different parameters at different levels of cross-correlation using the learned inverse cross-correlation function $g^{-1}$. It is observed that the linear rectifier correlator is more accurate when the signal of interest is highly correlated, while the empirical correlator makes less error in the other case. The error profile of the Huber and LSE correlators becomes more similar to the empirical correlator when $\sigma$ is high and $a$ is small and approaches the linear rectifier correlator otherwise. The MP correlator in~(\ref{eqn:MP}) is equivalent to the linear rectifier correlator when $\gamma=0$. As $\gamma$ increases, the performance degrades and performs even worse than the empirical correlator.

In Fig.~\ref{fig:SNR plot}, we plot the signal-to-noise-ratio (SNR) for these cross-correlators for different correlator length $N$, where the SNR is defined as:
\begin{equation}
    \text{SNR} =  20\log_{10}(\frac{1}{\Bar{\sigma}}),    
\end{equation}
where $\Bar{\sigma}$ is the average standard deviation of the error across multiple Monte-Carlo runs. The result shows that the linear rectifier correlator has the best SNR in joint Gaussian input distribution. Also, the SNR increases by 3dB when the correlation length $N$ is doubled, which can be attributed to the reduction in the estimation error due to simple averaging.

\begin{figure*}[b]
  \centering
\includegraphics[width=\textwidth]{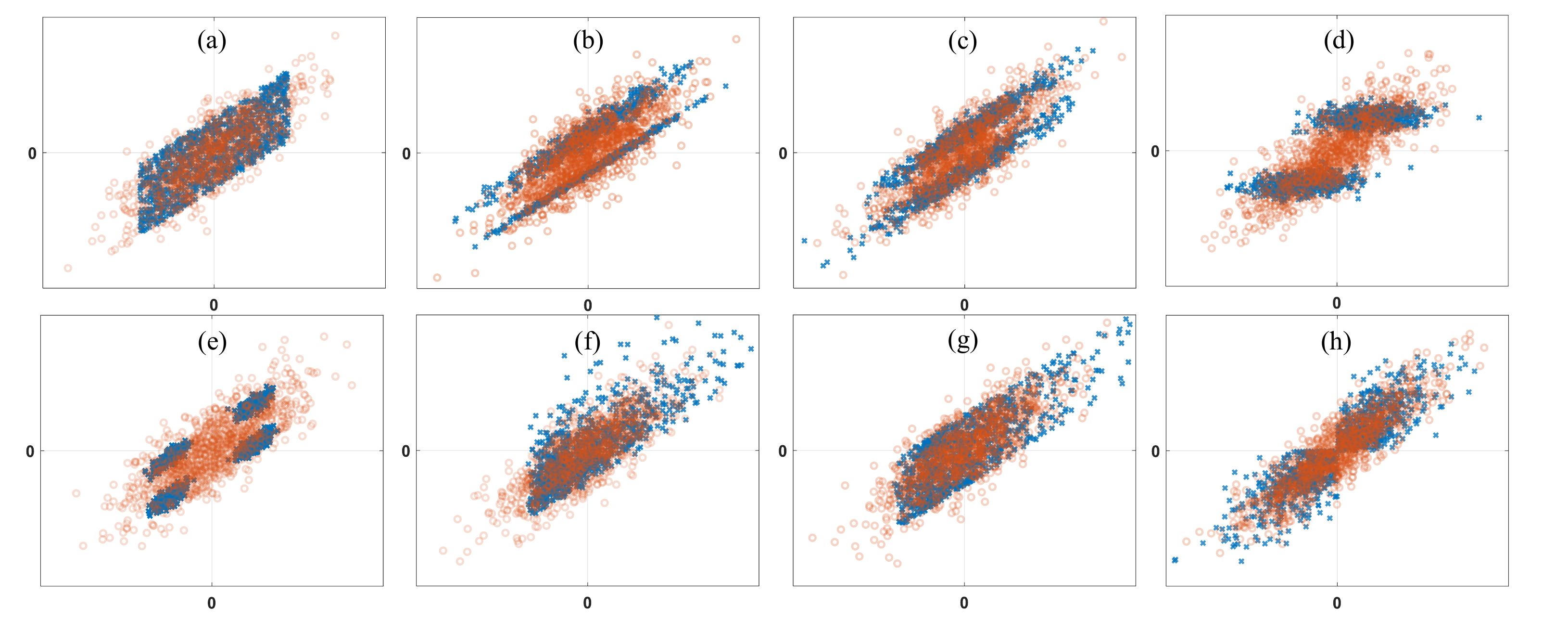}
         \caption{Scatter plots showing the random variables $x$ and $y$ drawn from different probability distributions with zero-mean, unit variance and a correlation R=0.8. The data points denoted by the blue cross are drawn from the original distribution and the data points denoted by the red circles are the inputs after applying WHT. After applying WHT, the resulting distribution converges towards a standard bivariate normal distribution with a correlation $R$=0.8.
         }
     \label{fig:distribution_plots}
\end{figure*}

\subsection{Monte-Carlo Experiments with WHT and Non-Gaussian Inputs}
\label{sec:Non-Gaussian exp}
In this section, the WHT method in section~\ref{sec:extension to WHT} was tested on non-Gaussian distributions to verify that the function $g(R)$ remains unchanged for non-Gaussian inputs after being transformed by the WHT. 
Fig.~\ref{fig:distribution_plots} shows varying joint input distributions before and after the WHT, which are denoted by blue crosses and red circles, respectively. The non-Gaussian distributions before the transformation are combinations of the uniform, Gaussian, and gamma distributions, and the inputs have a correlation of 0.8. It can be seen that the vectors become jointly Gaussian distributed after the transformation, and their correlation is retained. Monte-Carlo experiments show that the expected correlator output $\mathcal{E}[f(\mathcal{H}(x), \mathcal{H}(y))]$ is the same as the $g(R)$ for jointly Gaussian inputs. So no calibration is needed to learn $g^{-1}$ for different distributions.

On the other hand, it should be noticed that the standard deviation for different cross-correlation estimators is not guaranteed to be the same for different input distributions, even after the WHT transformation. To see this, notice that the WHT process will not change the output for the empirical correlator because the WHT transformation is unitary. However, the standard deviation of the empirical correlator, which is given by
\begin{equation}
\begin{split}
\mathcal{E}(XY - \mathcal{E}[XY])^2 = \mathcal{E}[X^2Y^2] - g^2(R)
= \int_{-\infty}^{\infty} \int_{-\infty}^{\infty} x^2y^2  \enskip p(x,y)dxdy - g^2(R),
\end{split}
\end{equation}
will change as the joint probability density function $p(x,y)$ changes.  In particular, for joint Gaussian distribution, it can be verified that $\mathcal{E}(xy - \mathcal{E}[xy])^2 = 1 + R^2$, whereas it is $1-\frac{1}{5}R^2$ in the case of jointly uniform distribution. As an example, Fig.~\ref{fig:sym_residue plots} shows the standard deviation of estimation error plots for the joint uniform distribution as shown in Fig.~\ref{fig:distribution_plots}a. It is observed that the contour of the error plot changes for all correlators except for the linear rectifier correlator, which is still the best-performing cross-correlation estimator in this specific non-Gaussian input distribution.
\begin{figure*}[t]
  \centering
         \includegraphics[width=\textwidth]{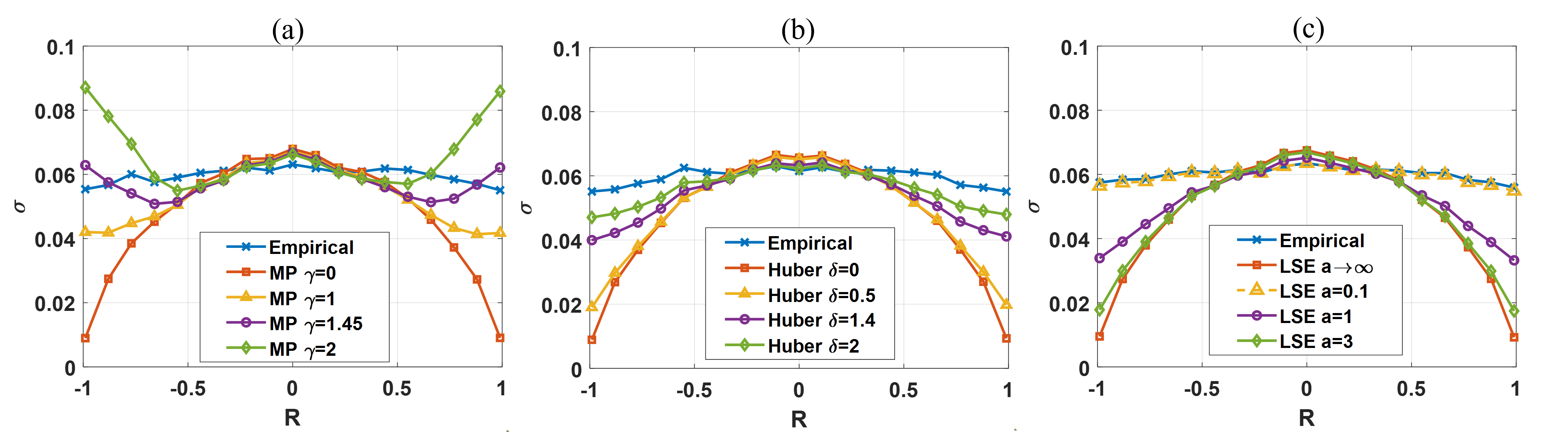}
    \caption{The standard deviation error plot of MP, Huber, and LSE correlators for the symmetric input distribution test when dimension is 256. a) The error plot of MP correlators with $\gamma = \{1,1.45,2\}$; b) The error plot of Huber correlators with $\sigma = 0.5,1.4,2\}$; c) The error plot of LSE correlators with $a = \{0.1, 1, 3\}$.}
     \label{fig:sym_residue plots}
\end{figure*}

\section{Discussions and Conclusions}
\label{sec:Conclusions}
In this paper, we present a mathematical framework for analyzing different types of cross-correlators. The closed-form solution facilitates the comparison of different cross-correlators and allows us to understand performance trade-offs. The analysis framework has been verified by Monte Carlo simulation for different input distributions. The error analysis reveals that the shape of the error profile exhibits a trade-off. Cross-correlation estimators that exhibit high errors near $R \approx 0$ make fewer errors near $|R| \approx 1$. The hyperparameters of the Huber estimators, MP estimators, and LSE estimators can be adapted to achieve different error profiles.

However, the complexity of implementing these different online estimators on hardware could be significantly different. The empirical and Huber cross-correlator relies on the quadratic function and hence may be difficult to implement on hardware. On the other hand, the MP and LSE correlators can be easily implemented on analog hardware~\cite{MingGuCMOS}. Another benefit of employing other correlators is the potential for improved computational efficiency and wider dynamic range when implementing them on digital systems. It is evident that the linear rectifier correlator and the MP correlator (without additional offsets) are more cost-effective than the empirical correlator since each multiplication is replaced by three and five addition operations~\cite{AbhishekMPFPGA}. The computation of the linear rectifier can be further simplified to condition and shift operations, and the addition operation is resistant to underflow on a fixed-point system. Moreover, the LSE correlator exhibits superior numerical stability at the expense of computational complexity, a common technique employed in machine learning to address the issue of gradient updates.

If an accurate estimate of the covariance or Pearson's correlation is necessary, then the calibration is required to learn the inverse function  $g^{-1}$ for inputs with diverse distributions and variances. To this end, the WHT is proposed as a pre-processing technique to convert input distributions to joint normal distributions, allowing the calibration process to be indifferent to input distributions. The limitation is that the transformation requires the dimension of inputs to be a power of two. However, calibration is still needed for varying variances, which inevitably introduces extra computational and resource expenses. On the other hand, the raw correlator output $f(\cdot)$ would be sufficient if the objective of the task is to obtain only a similarity score for different signals. For instance, in hyperdimensional computing, whether some $\bf{y} \in \mathbb{R}^N$ is contained in a set $\mathcal{S} \in \mathbb{R}^N$ is checked by if the dot product $\left<\phi(y),\phi(\mathcal{S})\right>$ is above a certain threshold, where $\phi(\cdot)$ is the hyperdimensional representation. The WHT can be incorporated into the hyperdimensional mapping process $\phi$, and alternative correlation functions, such as those discussed, can be used instead of resource-intensive and computationally inefficient inner products.

Regarding the accuracy for estimating the cross-correlation of jointly Gaussian distributed inputs, it appears that the empirical method may not be the most effective correlator. The linear rectifier correlator is superior in estimating the covariance for highly correlated signals and in terms of overall SNR. Its standard deviation of error plot has a similar trend with the Cramér–Rao bound (CRB). The performance gained for high correlation can be explained by the shape of $g(R)$. The variance of output $f(x,y)$ is relatively small with respect to the gradient of $g(R)$ for high correlation, which results in a larger confidence interval for estimations. The Huber and LSE correlator's performance is bounded by the linear rectifier and empirical correlator. The MP correlator in~(\ref{eqn:MP}) is equivalent to the linear rectifier correlator when $\gamma=0$. As $\gamma$ increases, its performance for higher $\gamma$ values can be potentially improved by introducing offsets. Of course, the above observations are not guaranteed to hold for other input distributions and are left for future research. 

\bibliographystyle{unsrt}  
\bibliography{references}

\end{document}